\DeclareFontFamily{T1}{lmsr}{}
\DeclareFontShape{T1}{lmsr}{m}{n}{<->ec-lmr5}{}
\newcommand*{\srmfamily}{\fontfamily{lmsr}\selectfont}
\DeclareTextFontCommand{\textsrm}{\srmfamily}
\newtheorem{thm}{Theorem}
\newtheorem{prop}{Proposition}
\newtheorem{assumption}{Assumption}
\newtheorem{remark}{Remark}
\def \R {\mathbb{R}}
\def \E {\mathrm{E}}
\def \x {\mathbf{w}}
\def \z {\mathbf{z}}
\def \L {\mathcal{L}}
\def \fh {\widehat{f}}
\def \c {\mathbf{c}}
\def \z {\mathbf{z}}
\def \gammah {\widehat{\gamma}}
\def \xh {\widehat{\x}}
\def \w {\mathbf{w}}
\def \B {\mathcal{B}}
\def \R {\mathbb{R}}
\def \B {\mathcal{B}}
\def \wh {\widehat{\w}}
\def \fb {\bar f}
\def \Lb {\bar{\L}}
\def \K {\mathcal{W}}
\def \W {\mathcal{W}}
\def \r {\mathbf{r}}
\def \lambdah {\widehat{\lambda}}
\def \blambda {\boldsymbol{\lambda}}
\def \bz {\boldsymbol{0}}
\def \bl {\boldsymbol{\lambda}}
\def \blh {\widehat{\bl}}
\newcommand{\dd}[2] { \left \langle {#1}, {#2} \right \rangle}
\title{Online Stochastic Optimization with Multiple Objectives }
\author{
    Mehrdad Mahdavi\\
    \small{Department of Computer Science}\\
    \small{Michigan State University}\\
    \small{\texttt{mahdavim@cse.msu.edu}}
  \and
  Tianbao Yang\\
{\small Machine Learning Lab} \\ \small{GE Global Research}\\
\small{\texttt{ tyang@ge.com}}
  \and
    Rong Jin\\
    \small{Department of Computer Science}\\
    \small{Michigan State University}\\
    \small{\texttt{rongjin@cse.msu.edu}}
}
\date{}
\begin{document}
\maketitle

\begin{abstract}
In this paper, we are interested in the development of efficient algorithms  for convex
optimization problems in the simultaneous presence of multiple objectives and stochasticity in the first-order information.   We  cast the stochastic multiple objective optimization problem into a constrained optimization problem by choosing one function as the objective and try to bound other objectives by appropriate thresholds.  We first examine  a two stages exploration-exploitation based algorithm which first approximates  the stochastic objectives by sampling  and  then solves a constrained stochastic optimization problem by projected gradient method. This method  attains a  suboptimal  convergence rate  even under  strong assumption on the objectives. Our second approach is an efficient primal-dual stochastic algorithm. It leverages on the theory of Lagrangian method in constrained optimization and attains  the optimal convergence rate of $O(1/ \sqrt{T})$ in high probability for general Lipschitz continuous objectives.
\end{abstract}

\section{Introduction}
Although both stochastic optimization~\cite{Nemirovski:2009:RSA:1654243.1654247,DBLP:conf/nips/BachM11,ohad-icml-2012,hazan-20110-beyond,DBLP:conf/icml/Zinkevich03, DBLP:journals/ftml/Shalev-Shwartz12,pegasos-2007} and multiple objective optimization~\cite{ehrgott2005multicriteria} are well studied subjects in Operational Research and Machine Learning~\cite{nips-multi,DBLP:journals/tsmc/JinS08,svore2011learning}, much less is developed for {\it stochastic multiple objective optimization}, which is the focus of this work. Unlike multiple objective optimization where we have  access to the complete objective functions, in stochastic multiple objective optimization, only stochastic samples of objective functions are available for optimization. Compared to the standard setup of stochastic optimization, the fundamental challenge of stochastic multiple objective optimization is how to make appropriate tradeoff between different objectives given that we only have access to stochastic oracles for different objectives. In particular, an algorithm for this setting has to ponder conflicting objective functions and accommodate the uncertainty in the objectives.

A simple approach toward stochastic multiple objective optimization is to linearly combine multiple objectives with a fixed weight assigned to each objective. It converts stochastic multiple objective optimization into a standard stochastic optimization problem, and is guaranteed to produce Pareto efficient solutions. The main difficulty with this approach is how to decide an appropriate weight for each objective, which is particularly challenging when the complete objective functions are unavailable. In this work, we consider an alternative formulation that casts multiple objective optimization into a constrained optimization problem. More specifically, we choose one of the objectives as the target to be optimized, and use the rest of the objectives as constraints in order to ensure that each of these objectives is below a specified level. Our assumption is that although full objective functions are unknown, their desirable levels can be provied due to the prior knowledge of the domain. Below, we provide a few examples that demonstrate the application of stochastic multiple objective optimization in the form of stochastic constrained optimization.

\par{\textbf{Robust Investment.} }{Let $\r  \in \R^n$ denote random returns of the $n$ risky assets, and $\w \in \K = \{ \w \in \R_{+}^d : \sum_{i}^{d}{w_i} = 1 \}$ denote the distribution of an investor's wealth over all $n$ risky assets. The return  for an investment distribution is defined as $\dd{\w}{\r}$. The investor needs  to consider  conflicting objectives such as rate of return, liquidity and risk in maximizing his wealth~\cite{abdelaziz2007multi}.  Suppose that $\mathbf{r}$ has a \textit{unknown} probability distribution with mean vector $\boldsymbol{\mu}$ and covariance matrix $\Sigma$. Then the target of the investor is to choose an optimal portfolio $\w$ that lies on the mean-risk efficient frontier.  In mean-variance theory~\cite{markowitz1952portfolio}, which trades off between the expected return (mean) and risk (variance) of a portfolio,  one is interested in  minimizing  the variance subject to budget constraints which leads to a formulation like:
\[ \min_{\w \in  \R_{+}^d, \sum_{i}^{d}{w_i} = 1} \left[ \dd{\w}{\E[{\r}{\r}^{\top}] \w} \right] \quad \text{subject to} \quad \E[\dd{\mathbf{r}}{\w}] \geq \gamma. \]}
\par{\textbf{Neyman-Pearson Classification.} }{In the Neyman-Pearson (NP) classification paradigm~(see e.g,~\cite{rigollet2011neyman}), the goal is to learn a classifier from
labeled training data such that the probability of a false negative is minimized while the probability
of a false positive is below a user-specified level $\gamma \in  (0, 1)$. Let hypothesis class be a parametrized convex set $\W = \{ \x \mapsto \langle \w, \mathbf{x} \rangle: \w \in \R^d,   \|\w\| \leq R\}$ and for all $(\mathbf{x}, y) \in \R^d \times \{-1, +1\}$, the loss function $\ell(\cdot, (\mathbf{x}, y))$ be a non-negative convex function. While the goal of classical binary classification problem is to minimize the risk as $\min_{\w \in \W} \left[ \L(\w) = \E\left[\ell(\w; (\mathbf{x},y))\right]\right]$, the Neyman-Pearson targets on
\[ \min_{\w \in \W} \L^{+}(\w) \quad \text{subject to} \quad \L^{-}(\w)   \leq \gamma, \]
where $\L^{+}(\w) = \E[\ell(\w; (\mathbf{x},y))|y = +1]$ and $\L^{-}(\w) = \E[\ell(\w; (\mathbf{x},y))|y = -1]$.

}

\par{\textbf{Linear Optimization with Stochastic Constraints.} } {In many applications in economics, most notably in welfare and utility theory, and management  parameters are known only stochastically and  it is unreasonable to assume that the objective functions and the solution domain  are deterministically fixed. These situations involve the challenging task of pondering both conflicting goals and random data concerning the uncertain parameters of the problem. Mathematically, the goal in multi-objective linear programming with stochastic information is to solve:
\[ \min_{\w}  \left[\dd{\c_1(\xi)}{\w}, \cdots, \dd{\c_K(\xi)}{\w}\right]\quad \text{subject to} \quad    \w \in \W = \{\w \in \R_{+}^d: A(\xi)\w \leq \mathbf{b}(\xi)\},  \]
where $\xi$ is the randomness in the parameters, $c_i, i \in [K]$  are the objective functions, and $A$ and $\mathbf{b}$ formulate the constraints on the solution.}

In this paper, we first examine two methods that try to eliminate the multi-objective aspect or the stochastic nature of stochastic multiple objective optimization and reduce the problem to a standard convex optimization problem.  We show that both methods fail to tackle the problem of stochastic multiple objective optimization in general and require  strong assumptions on the stochastic objectives, which limits their applications to real world problems.  Having discussed these negative results, we propose an algorithm that can solve the problem  optimally and efficiently. We achieve this by an efficient  primal-dual stochastic gradient descent method that is able to attain $O(1/\sqrt{T})$ convergence rate for all the objectives under the standard assumption of the Lipschitz continuity of objectives which is known to be optimal (see for instance~\cite{sgd-lowerbound}).  We note that  there is a flurry of research on heuristics-based methods to address the multi-objective stochastic optimization problem (see e.g., ~\cite{caballero2004stochastic} and~\cite{abdelaziz2012solution} for a recent survey on existing methods). However, in contrast to this study, most of these approaches do not have theoretical guarantees. 

Finally, we would like to distinguish our work from robust optimization~\cite{ben2009robust} and online learning with long term constraint~\cite{mahdavi-jmlr-2012}. Robust optimization was designed to deal with uncertainty within the optimization systems. Although it provides a principled framework for dealing with stochastic constraints, it often ends up with non-convex optimization problems that are not computationally tractable. Online learning with long term constraint generalizes online learning. Instead of requiring the constraints to be satisfied by every solution generated by online learning, it allows the constraints to be satisfied by the entire sequence of solutions. However, unlike stochastic multiple objective optimization, in online learning with long term constraints, constraint functions are {\it fixed} and {\it known} before the start of online learning. 


\par{\textbf{Outline}}{ The remainder of the paper is organized as follows. In Section~\ref{sec:prem} we establish the necessary notation and introduce the problem under consideration. Section~\ref{sec:reduction} introduces the problem reduction methods and elaborates their disadvantages. Section~\ref{sec:primal-dual} presents our efficient primal-dual stochastic optimization algorithm. Finally, we conclude the paper with open questions in Section~\ref{sec:conclusion}.
}
\section{Preliminaries}\label{sec:prem}
\paragraph{Notation}{Throughout this paper, we use the following notation. We use bold-face letters to denote vectors.   We denote the  inner product between two vectors $\w,\w' \in \W$ by $\dd{\w}{\w'}$ where $\W \subseteq \R^d$ is a compact closed domain. For $m \in \mathbb{N}$, we denote
by $[m]$ the set $\{1, 2, \cdots, m\}$.  We only consider the $\ell_2$ norm throughout the paper. The ball with radius $R$ is denoted by $\B = \left\{\x \in \R^d: \|\x\| \leq R \right\}$.}

\paragraph{Statement of the Problem}{ In this work, we generalize  online convex optimization (OCO)  to the case of multiple objectives. In particular, at each iteration, the learner is asked to present a solution $\x_t$, which will be evaluated by multiple loss functions $f^0_t(\x), f^1_t(\x), \ldots, f^m_t(\x)$. A fundamental difference between single- and multi-objective optimization is that for the latter it is not obvious how to evaluate the optimization quality. Since it is impossible to simultaneously minimize multiple loss functions and in order to avoid complications caused by handling more than one objective, we choose one function as the objective and try to bound other objectives by appropriate thresholds. Specifically, the goal of OCO with multiple objectives becomes to minimize $\sum_{t=1}^T f^0_t(\x_t)$ and at the same time keep the other objective functions below a given threshold, i.e.
\begin{eqnarray*}
    \frac{1}{T}\sum_{t=1}^T f^i_t(\x_t) \leq \gamma_i,\; i \in [m],
\end{eqnarray*}
where $\x_1, \ldots, \x_T$ are the solutions generated by the online learner and $\gamma_i$ specifies the level of loss that is acceptable to the $i$th objective function.

Since the general setup (i.e., full adversarial setup) is challenging for online convex optimization even with two objectives~\cite{DBLP:journals/jmlr/MannorTY09}, in this work, we consider a simple scenario where all the loss functions $\{f_t^i(\w)\}_{i=1}^m$ are i.i.d samples from  unknown distribution~\cite{DBLP:conf/colt/Shalev-ShwartzSSS09}.   We also note that our goal is NOT to find a Pareto efficient solution (a solution is Pareto efficient if it is not dominated by any solution in the decision
space). Instead, we aim to find a solution that (i) optimizes one selected objective, and (ii) satisfies all the other objectives with respect to the specified thresholds.

We denote by $\fb_i(\w) = \E_t[f_t^i(\w)], i =0, 1, \ldots, m$ the expected loss function of sampled function $f_t^i(\w)$. In stochastic multiple objective optimization, we assume that we do not have direct access to the expected loss functions and the only information available to the solver is through a stochastic oracle that returns a stochastic realization of the expected loss function at each call. We assume that there exists a solution $\x$ strictly satisfying all the constraints, i.e. $\fb_i(\x) < \gamma_i, i \in [m]$. We denote by $\x_*$ the optimal solution to multiple objective optimization, i.e.,
\begin{eqnarray}
    \x_* = \mathop{\arg\min}\left\{\fb_0(\x): \fb_i(\x) \leq \gamma_i, i \in [m]\right\}. \label{eqn:opt-1}
\end{eqnarray}
Our goal is to efficiently compute a solution $\xh_T$ after $T$ trials that (i) obeys all the constraints, i.e. $\fb_i(\xh_T) \leq \gamma_i, i \in [m]$ and (ii) minimizes the objective $\fb_0$ with respect to the optimal solution $\x_*$, i.e. $\fb_0(\xh_T) - \fb_0(\x_*)$. For the convenience of discussion, we refer to $f^0_t(\w)$ and $\fb_0(\w)$ as the objective function, and to $f_t^i(\w)$ and $\fb_i(\w)$ as the constraint functions.

Before discussing the algorithms, we first mention a few assumptions made in our analysis. We assume that the optimal solution $\x_*$ belongs to $\B$.  We also make the standard assumption that all the loss functions, including both the objective function and constraint functions, are Lipschitz continuous, i.e., $|f_t^i(\x) - f_t^i(\x')| \leq L\|\x - \x'\|$ for any $\x, \x' \in \B$.}

\section{Problem Reduction and  its Limitations}\label{sec:reduction}

Here we examine two algorithms to cope with the  complexity of stochastic optimization with multiple objectives and discuss some negative results which motivate the primal-dual algorithm presented in Section~\ref{sec:primal-dual}.  The first method transforms a stochastic multi-objective problem into a  stochastic single-objective optimization problem and then solves the latter problem by any stochastic programming approach. Alternatively, one can eliminate  the randomness of the problem by   estimating the stochastic objectives and transform the problem into a deterministic multi-objective  problem.

\subsection{Linear Scalarization with Stochastic Optimization}

A simple  approach to solve stochastic optimization problem with multiple objectives is to eliminate  the multi-objective aspect of the problem by aggregating the $m+1$ objectives into  a single objective $\sum_{i=0}^{m}{\alpha_i f^i_t(\x_t)}$, where $\alpha_i, i \in \{0,1, \cdots, m\}$ is the weight of $i$th objective, and then solving the resulting single objective stochastic problem by stochastic optimization  methods. This approach is in general known as the weighted-sum or scalarization method~\cite{abdelaziz2012solution}. Although this naive idea considerably facilitates the computational challenge of the problem, unfortunately,  it is difficult to decide the weight for each objective, such that the specified levels for different objectives are obeyed. Beyond the hardness of optimally determining the weight of individual functions, it is also unclear how to bound the sub-optimality of final solution for individual objective functions.

\subsection{Projected Gradient Descent with Estimated Objective Functions}
The main challenge of the proposed problem is that the expected constraint functions $\fb_i(\w)$ are not given. Instead, only a sampled  function is provided at each trial $t$. Our naive approach is to replace the expected constraint function $\fb_i(\w)$ with its empirical estimation based on sampled objective functions.
This approach circumvents the problem of stochastically optimizing multiple objective into the original online convex optimization with complex projections, and therefore can be solved by projected gradient descent. More specifically, at trial $t$, given the current solution $\x_t$ and received loss functions $f^i_t(\x), i=0,1,\ldots, m$, we first estimate the  constraint functions as
\[
    \fh_t^i(\x) = \frac{1}{t}\sum_{k=1}^t f_k^i(\x), i \in [m],
\]
and then update the solution by $\x_{t+1} = \Pi_{\K_t}\left(\x_{t} - \eta \nabla f^0_t(\x_t)\right\}$ where $\eta > 0$ is the step size, $\Pi_{\K}(\x) =  \min_{\z \in \K} \|\z-\x \|$ projects a solution $\w$ into domain $\W$, and $\K_t$ is an approximate domain given by $\K_t = \{\x: \fh_t^i(\x) \leq \gamma_i, i \in [m] \}$.

One problem with the above approach is that although it is feasible to satisfy all the constraints based on the true expected constraint functions, there is no guarantee that the approximate domain $\K_t$ is not empty. One way to address this issue is to estimate the expected constraint functions by burning the first $b T$ trials, where $b \in (0, 1)$ is a constant that needs to be adjusted to obtain the optimal performance, and keep the estimated constraint functions unchanged afterwards. Given the sampled  functions $f_1^i, \ldots, f_{bT}^i$ received in the first $bT$ trials, we compute the approximate domain $\K'$ as
\[
    \fh^i(\x) = \frac{1}{bT}\sum_{t=1}^{bT} f^i_t(\x), i \in [m], \; \K' = \left\{\x: \fh_i(\x) \leq \gamma_i + \hat{\gamma}_i, i=1, \ldots, m \right\}
\]
where $\hat{\gamma}_i > 0 $ is a relaxed constant introduced to ensure that with a high probability, the approximate domain $\W_t$ is not empty provided that the original domain $\W$ is not empty.

To ensure the correctness of the above approach, we need to establish some kind of uniform (strong) convergence assumption to make sure that the solutions obtained by projection onto the estimated domain $\W'$ will be close to the true domain $\W$ with high probability. It turns out that the following assumption ensures the desired property.

\begin{assumption}[Uniform Convergence] Let $\fh^i(\x), i = 0,1,\cdots, m$ be the estimated functions obtained by  averaging over $bT$ i.i.d samples for  $\fb_i(\w), i \in [m]$. We assume that, with a high probability,
\[\sup_{\w \in \W} \left|\fh^i(\x)-\fb_i(\w)\right| \leq O([bT]^{-q}), i = 0,1,\cdots, m.\]
\label{assumption1}
where $q > 0$ decides the convergence rate.
\end{assumption}
It is straightforward to show that under Assumption~\ref{assumption1}, with a high probability, for any $\x \in \K$, we have $\x \in \K'$, with appropriately chosen relaxation constant $\hat{\gamma}_i, i \in [m]$.  Using the estimated domain $\K'$, for trial $t \in [b T + 1, T]$, we update the solution by
\[
    \x_{t+1} = \Pi_{\K'}(\x_{t} - \eta \nabla f^0_t(\x_t)).
\]
There are however several drawbacks with this naive approach. Since the first $bT$ trials are used for estimating the constraint functions, only the last $(1 - b)T$ trials are used for searching for the optimal solution. The total amount of violation of individual constraint functions for the last $(1 - b)T$ trials, given by $\sum_{t=bT+1}^T \fb_i(\x_t)$, is $O((1 - b) b^{-q} T^{1 - q})$, where each of the $(1 -b)T$ trials receives a violation of $O([bT]^{-q})$. Similarly, following the conventional analysis of online learning~\cite{DBLP:conf/icml/Zinkevich03}, we have $\sum_{t=bT+1}^T (f_t^0(\w_t) - f_t^0(\w_*)) \leq O(\sqrt{(1 - b)T})$. Using the same trick as in~\cite{mahdavi-jmlr-2012}, to obtain a solution with zero violation of constraints, we will have a regret bound $O((1 - b) b^{-q} T^{1 - q} + \sqrt{(1 - b)T})$, which yields a convergence rate of $O(T^{-1/2} + T^{-q})$ which could be worse than the optimal rate $O(T^{-1/2})$ when $q < 1/2$. Additionally, this approach requires memorizing the constraint functions of the first $bT$ trials. This is in contrast to the typical assumption of online learning where only the solution is memorized.

\begin{remark}We finally remark on the uniform convergence assumption, which holds when the constraint functions are linear~\cite{xu2007convergence}, but unfortunately does not hold for general convex Lipschitz functions. In particular, one can simply show examples where there is no uniform convergence for stochastic convex Lipchitz functions in infinite dimensional spaces~\cite{DBLP:conf/colt/Shalev-ShwartzSSS09}. Without uniform convergence assumption, the approximate domain $\W'$ may depart from the true $\W$ significantly at some unknown point, which makes the above approach to fail for general convex objectives.
 \end{remark}
\begin{algorithm}[t]
\caption{{Primal-Dual Stochastic  Optimization with Multiple Objectives}} \label{alg:1}
\begin{algorithmic}[1]
\STATE \textbf{INPUT}: step size $\eta$, $\lambda^{0}_i > 0, i \in [m]$ and total iterations $T$
\STATE $\x_1 = \bl_{1} = \bz$
\FOR{$t = 1, \ldots, T$}
    \STATE Submit the solution $\x_t$
    \STATE Receive loss functions $f^i_t, i=0, 1, \ldots, m$
    \STATE Compute the gradients $\nabla f^i_t(\x_t), i=0, 1, \ldots, m$
    \STATE Update the solution $\x$ and $\blambda$ by
    \begin{eqnarray*}
        \x_{t+1} & = & \Pi_{\B}\left(\x_t - \eta \nabla_\x \L_t(\x_t, \lambda_t)\right) = \Pi_{\B}\left(\x_t - \eta \left[ \nabla f_t^0(\x_t) + \sum_{i=1}^m \lambda_t^i \nabla f_t^i(\x_t) \right]\right), \\
        \lambda^i_{t+1} & = & \Pi_{[0, \lambda_i^0]}\left(\lambda^i_t + \eta \nabla_{\lambda_i} \L_t(\x_t, \lambda_t)\right) = \Pi_{[0, \lambda_i^0]}\left(\lambda^i_t + \eta \left[f_t^i(\x_t) - \gamma_i\right]\right).
        \end{eqnarray*}

\ENDFOR
\STATE Return $\hat{\x}_T = \sum_{t=1}^T \x_t/T$
\end{algorithmic}
\end{algorithm}
To address these limitations and in particular the dependence on uniform convergence assumption, we present an algorithm that does not require projection when updating the solution and  does not require to impose any additional assumption on the stochastic  functions except for the standard Lipschitz continuity assumption.  We show that with a high probability, the solution found by the proposed algorithm will exactly satisfy the expected constraints and achieves a regret bound of $O(\sqrt{T})$. We note that our result is closely related to the recent studies of learning from the viewpoint of optimization~\cite{sridharan-2012-learning}, which state that solutions found by stochastic gradient descent can be statistically consistent even when uniform convergence theorem does not hold.

\section{A Primal-Dual Algorithm for Stochastic Multi-Objective Optimization}\label{sec:primal-dual}
We now turn to devise a tractable formulation of the problem, followed by an efficient primal-dual optimization algorithm and the statements of our main results. The main idea of the proposed algorithm is to design an appropriate objective that combines the loss function $\fb_0(\w)$ with $\{\fb_i(\w)\}_{i=1}^m$. As mentioned before, owing to the presence of conflicting goals and the randomness nature of the objective functions, we resort to seek for  a solution that satisfies all the objectives instead of an optimal one. To this end, we define the following objective function
\[
    \Lb(\x, \blambda) = \fb_0(\x) + \sum_{i=1}^m \lambda_i (\fb_i(\x) - \gamma_i).
\]
Note that the objective function consists of both the primal variables $\x$ and dual variables $\bl = (\lambda_1, \ldots, \lambda_m)^{\top} \in \Lambda$, where $\Lambda \subseteq \R_{+}^m$ is a compact convex set that bounds the set of dual variables and will be discussed later. In the proposed algorithm, we will simultaneously update solutions for both $\x$ and $\blambda$. By exploring convex-concave optimization theory~\cite{nemirovski-efficient}, we will show that with a high probability, the solution of regret $O(\sqrt{T})$  exactly obeyes the constraints.

As the first step, we consider a simple scenario where the obtained solution is allowed to violate the constraints. The detailed  steps of our primal-dual  algorithm is presented in Algorithm~\ref{alg:1} . It follows the same procedure as convex-concave optimization. Since at each iteration, we only observed a randomly sampled loss functions $f^i_t(\w), i=0,1,\ldots, m$, the objective function given by
\[
    \L_t(\x, \bl) = f^0_t(\x) + \sum_{i=1}^m \lambda_i (f_t^i(\x) - \gamma_i)
\]
provides an unbiased estimate of $\Lb(\x, \blambda)$. Given the approximate objective $\L_t(\x, \blambda)$, the proposed algorithm tries to minimize the objective $\L_t(\w, \blambda)$ with respect to the primal variable $\x$ and maximize the objective with respect to the dual variable $\blambda$.  

To facilitate the analysis, we first rewrite the  the constrained optimization problem
\begin{eqnarray*}
    \min\limits_{\x \in \B \cap \K} \fb_0(\x)
\end{eqnarray*}
where  $\K$ is defined as $\K = \left\{\x: \fb_i(\x) \leq \gamma_i, i=1, \ldots m \right\}$ in the following equivalent form:
\begin{eqnarray}
    \min\limits_{\x \in \B} \max\limits_{\blambda \in \R_+^m} \fb_0(\x) + \sum_{i=1}^m \lambda_i(\fb_i(\x) - \gamma_i). \label{eqn:dual}
\end{eqnarray}
We denote by $\x_*$ and $\blambda_* = (\lambda_1^*, \ldots, \lambda_m^*)^{\top}$ as the optimal primal and dual solutions to the above convex-concave optimization problem, i.e.,
\begin{eqnarray}
    \x_* & = & \mathop{\arg\min}\limits_{\x \in \B} \fb_0(\x) + \sum_{i=1}^m \lambda_i^*(\fb_i(\x) - \gamma_i), \label{eqn:x*} \\
    \blambda_* & = & \mathop{\arg\max}\limits_{\blambda \in \R_+^m} \fb_0(\x_*) + \sum_{i=1}^m \lambda_i(\fb_i(\x_*) - \gamma_i). \label{eqn:lambda*}
\end{eqnarray}
The following assumption establishes upper bound on the gradients of $\L(\w, \bl)$ with respect to $\w$ and $\bl$. We later show that this  assumption holds under a mild condition on the objective functions.

\begin{assumption}[Gradient Boundedness] The gradients $\nabla_\x \L(\x, \blambda)$ and $\nabla_{\blambda} \L(\x, \blambda)$ are uniformly bounded, i.e., there exist a constant $G > 0$ such that
\[   \max\left(\nabla_\x \L(\x, \blambda), \nabla_{\blambda} \L(\x, \blambda) \right) \leq G, \;\; \text{for any} \;\; \w \in \B \;\;\text{and} \;\; \blambda \in \Lambda.\]
\label{ass:boundedness}
\end{assumption}
Under the preceding assumption, in the following  theorem, we show that under appropriate conditions, the average solution $\xh_T$ generated  by of Algorithm~\ref{alg:1} attains a convergence rate  of $O(1/\sqrt{T})$ for both the regret and the violation of the constraints.
\begin{thm} \label{thm:1} Set $\lambda_0^i \geq \lambda_i^* + \theta, i \in [m]$, where $\theta > 0$ is a constant.  Let $\xh_T$ be the solution obtained by Algorithm~\ref{alg:1}  after $T$ iterations. Then, with a probability $1 - (2m+1)\delta$, we have
\[
    \fb_0(\xh_T) - \fb_0(\x_*) \leq \frac{\mu(\delta)}{\sqrt{T}}\;\; \text{and} \;\; \fb_i(\xh_T) - \gamma_i \leq \frac{\mu(\delta)}{\theta\sqrt{T}}, i \in [m]
\]
where $D^2 = \sum_{i=1}^m [\lambda^0_i]^2$, $\eta = [\sqrt{ (R^2+D^2)/2T}]/G$, and
\begin{eqnarray}
    \mu(\delta) = \sqrt{2} G\sqrt{R^2 + D^2} + 2G(R+D)\sqrt{2\ln\frac{1}{\delta}}. \label{eqn:mu}
\end{eqnarray}
\end{thm}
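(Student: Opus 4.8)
The plan is to follow the standard regret analysis for convex-concave saddle-point problems adapted to the stochastic primal-dual updates in Algorithm~\ref{alg:1}, and then convert the resulting bound on the gap $\Lb(\xh_T,\blambda) - \Lb(\x,\blambda_*)$ into separate statements about $\fb_0(\xh_T)-\fb_0(\x_*)$ and the constraint violations $\fb_i(\xh_T)-\gamma_i$. First I would treat the pair $(\x_{t+1},\blambda_{t+1})$ as a projected stochastic (sub)gradient step on the random function $\L_t(\x,\blambda)$, which is convex in $\x$ and concave (in fact linear) in $\blambda$. Using the nonexpansiveness of the projections $\Pi_{\B}$ and $\Pi_{[0,\lambda_i^0]}$ and Assumption~\ref{ass:boundedness} to bound the gradient norms by $G$, the one-step inequality for any fixed comparator $(\x,\blambda)\in\B\times\Lambda$ reads, after summing over $t=1,\dots,T$ and dividing by $T$,
\[
 \frac1T\sum_{t=1}^T\big[\L_t(\x_t,\blambda)-\L_t(\x,\blambda_t)\big] \le \frac{R^2+D^2}{2\eta T} + \frac{\eta G^2}{2},
\]
and choosing $\eta=\sqrt{(R^2+D^2)/(2T)}/G$ makes the right-hand side equal to $G\sqrt{2(R^2+D^2)}/\sqrt{T}$, which is the deterministic part of $\mu(\delta)/\sqrt{T}$.

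The second step is to replace the sampled $\L_t$ by its expectation $\Lb$. By bilinearity and convexity-concavity, $\frac1T\sum_t\L_t(\x_t,\blambda)\ge \Lb(\xh_T,\blambda) - (\text{noise})$ after applying Jensen in $\x$ and linearity in $\blambda$; similarly $\frac1T\sum_t\L_t(\x,\blambda_t)\le \Lb(\x,\bar\blambda_T)+(\text{noise})$, or more directly one keeps $\x=\x_*$ fixed so only the $\x_t$-dependent noise terms matter. The martingale-difference terms are of the form $\sum_t\langle \nabla f_t^i(\x_t)-\nabla\fb_i(\x_t), \x_t-\x_*\rangle$ and $\sum_t (f_t^i(\x_t)-\fb_i(\x_t))(\lambda_t^i-\lambda^i)$; each has bounded increments (using $\|\x_t-\x_*\|\le 2R$, $|\lambda_t^i-\lambda^i|\le 2\lambda_i^0\le 2D$, and Lipschitz/boundedness), so Azuma–Hoeffding gives, with probability $1-\delta$ per term, a deviation of order $G(R+D)\sqrt{T\ln(1/\delta)}$; dividing by $T$ and union-bounding over the $2m+1$ noise terms yields the $2G(R+D)\sqrt{2\ln(1/\delta)}/\sqrt{T}$ piece of $\mu(\delta)/\sqrt{T}$ and the failure probability $(2m+1)\delta$. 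Combining, with probability $1-(2m+1)\delta$,
\[
 \Lb(\xh_T,\blambda) - \Lb(\x_*,\blambda_t\text{-avg}) \le \frac{\mu(\delta)}{\sqrt{T}}\quad\text{for all }\blambda\in\Lambda.
\]

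The third step extracts the two conclusions. Taking $\blambda=\blambda_*$ and using that $\x_*$ together with $\blambda_*$ is a saddle point (so $\Lb(\x_*,\blambda_*)=\fb_0(\x_*)$ and $\fb_i(\x_*)\le\gamma_i$ makes the averaged-dual term on the left nonpositive) gives $\fb_0(\xh_T)-\fb_0(\x_*)\le \mu(\delta)/\sqrt{T} + \sum_i \lambda_i(\fb_i(\xh_T)-\gamma_i)$ for the choice $\blambda=\bz$ — actually one gets the clean bound $\fb_0(\xh_T)-\fb_0(\x_*)\le\mu(\delta)/\sqrt{T}$ by taking $\blambda=\bz$ and noting $\Lb(\xh_T,\bz)=\fb_0(\xh_T)$ while the comparator term is $\ge\fb_0(\x_*)$ since $\x_*$ minimizes $\Lb(\cdot,\blambda_*)$ over $\B$. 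For the constraint violation, I would pick, for a fixed $i$, the adversarial dual choice $\blambda = \lambda_i^0 \e_i$ (all mass on coordinate $i$, legitimate since $\lambda_i^0\ge\lambda_i^*+\theta$ so this lies in $\Lambda$); then $\Lb(\xh_T,\lambda_i^0\e_i) = \fb_0(\xh_T) + \lambda_i^0(\fb_i(\xh_T)-\gamma_i)$, and bounding $\fb_0(\xh_T)\ge\fb_0(\x_*) - \mu(\delta)/\sqrt T$ (or just $\ge$ the saddle value minus the gap) rearranges to $\lambda_i^0(\fb_i(\xh_T)-\gamma_i)\le \mu(\delta)/\sqrt T + (\fb_0(\x_*)-\fb_0(\xh_T))^+ \le \ldots$; using $\lambda_i^0 - \lambda_i^* \ge \theta$ and the optimality/complementary-slackness relation for $(\x_*,\blambda_*)$ to cancel the $\lambda_i^*$ contribution leaves $\theta(\fb_i(\xh_T)-\gamma_i)\le \mu(\delta)/\sqrt T$, i.e.\ $\fb_i(\xh_T)-\gamma_i\le \mu(\delta)/(\theta\sqrt T)$.

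I expect the main obstacle to be the third step rather than the mechanical regret bound: one must be careful that the comparator in the dual variable is allowed to be chosen \emph{after} seeing the trajectory (hence the $\sup_{\blambda\in\Lambda}$ must be inside the high-probability event, which is why the noise terms are bounded uniformly in the comparator before applying Azuma), and one must correctly use the saddle-point optimality conditions $\lambda_i^*(\fb_i(\x_*)-\gamma_i)=0$ and $\fb_0(\x_*)+\sum_i\lambda_i^*(\fb_i(\x_*)-\gamma_i)\le\fb_0(\x)+\sum_i\lambda_i^*(\fb_i(\x)-\gamma_i)$ to get the separation between the objective-suboptimality bound and the violation bound with the correct $1/\theta$ factor. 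A secondary technical point is verifying that the average dual iterate term $\sum_i \bar\lambda_T^i(\fb_i(\x_*)-\gamma_i)$ appearing on the comparator side is $\le 0$, which follows from $\fb_i(\x_*)\le\gamma_i$ and $\bar\lambda_T^i\ge 0$; this is what lets us drop it cleanly.
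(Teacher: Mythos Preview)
Your overall strategy---one-step projected gradient/ascent inequality, telescoping, Azuma--Hoeffding for the martingale noise, Jensen, then plugging in specific comparators---is exactly the paper's. The objective bound via the comparator $(\x,\blambda)=(\x_*,\bz)$ is correct and matches the paper.

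The gap is in your extraction of the constraint-violation bound. With the comparator $\blambda=\lambda_i^0\,\e_i$ you obtain
\[
\fb_0(\xh_T)+\lambda_i^0\bigl(\fb_i(\xh_T)-\gamma_i\bigr)-\fb_0(\x_*)\ \le\ \frac{\mu(\delta)}{\sqrt{T}},
\]
and then you want to peel off $\theta(\fb_i(\xh_T)-\gamma_i)$ using primal optimality of $\x_*$. But (\ref{eqn:x*}) only gives
\[
\fb_0(\x_*)-\fb_0(\xh_T)\ \le\ \sum_{j=1}^m \lambda_j^*\bigl(\fb_j(\xh_T)-\gamma_j\bigr),
\]
so after splitting $\lambda_i^0=\lambda_i^*+(\lambda_i^0-\lambda_i^*)$ you are left with
\[
\theta\bigl(\fb_i(\xh_T)-\gamma_i\bigr)\ \le\ \frac{\mu(\delta)}{\sqrt{T}}+\sum_{j\ne i}\lambda_j^*\bigl(\fb_j(\xh_T)-\gamma_j\bigr).
\]
The cross term $\sum_{j\ne i}\lambda_j^*(\fb_j(\xh_T)-\gamma_j)$ has no sign in general (the other constraints may also be violated at $\xh_T$), so the argument stalls. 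Complementary slackness only controls $\lambda_j^*(\fb_j(\x_*)-\gamma_j)$, not $\lambda_j^*(\fb_j(\xh_T)-\gamma_j)$.

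The fix, and this is precisely what the paper does, is to choose the dual comparator as $\lambda_i=\lambda_i^0$ \emph{and} $\lambda_j=\lambda_j^*$ for $j\ne i$ (these are deterministic, so they are admissible in the martingale argument just like $\x_*$). With this choice the term $\sum_{j\ne i}\lambda_j^*(\fb_j(\xh_T)-\gamma_j)$ already sits on the left-hand side, and (\ref{eqn:x*}) together with (\ref{eqn:lambda*}) yields directly
\[
\theta\bigl(\fb_i(\xh_T)-\gamma_i\bigr)\ \le\ \frac{\mu(\delta)}{\sqrt{T}}.
\]
Everything else in your plan goes through as written.
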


\begin{remark}The parameter $\theta \in \R_{+}$ is a quantity that may be set to obtain sharper
upper bound on the violation of constraints and may be chosen arbitrarily. In particular, a larger value for $\theta$ imposes larger penalty on the  violation of the constraints and results in a smaller violation for the objectives.
\end{remark}
We also can develop an algorithm that allows the solution to exactly satisfy all the constraints. To this end, we define $  \gammah_i = \gamma_i - \frac{\mu(\delta)}{\theta\sqrt{T}} $. We will run Algorithm~\ref{alg:1} but with $\gamma_i$ replaced by $\gammah_i$. Let   $G'$ denote the upper bound in Assumption~\ref{ass:boundedness}  for $\nabla_{\bl}\L(\w,\bl)$ with $\gammah_i$ is replaced  by  $\gamma_i, i \in [m]$. The following theorem shows the property of the obtained average solution $\xh_T$.
\begin{thm} \label{thm:2}
Let $\xh_T$ be the solution obtained by Algorithm~\ref{alg:1} with $\gamma_i$ replaced by $\gammah_i$ and $\lambda_0^i = \lambda_i^* + \theta, i \in [m]$. Then, with a probability $1 - (2m+1)\delta$, we have
\[
    \fb_0(\xh_T) - \fb_0(\x_*) \leq \frac{(1 + \sum_{i=1}^m \lambda_i^0)\mu'(\delta)}{\sqrt{T}} \;\; \text{and} \;\; \fb_i(\xh_T) \leq \gamma_i, i \in [m],
\]
where $\mu'(\delta)$ is same  as (\ref{eqn:mu}) with $G$ is replaced by $G'$ and $\eta = [\sqrt{ (R^2+D^2)/2T}]/G'$.
\end{thm}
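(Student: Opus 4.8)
\textbf{Proof plan for Theorem~\ref{thm:2}.} The plan is to reduce the statement to Theorem~\ref{thm:1} applied to the \emph{tightened} problem $\min_{\x\in\B}\{\fb_0(\x):\fb_i(\x)\le\gammah_i,\ i\in[m]\}$, and then to translate its guarantees back into statements about the original thresholds $\gamma_i$. I would first fix notation: let $\widetilde{\x}_*$ and $\widetilde{\blambda}_*$ be an optimal primal--dual pair for the tightened problem, write $\Lb^{\gammah}(\x,\blambda)=\fb_0(\x)+\sum_{i=1}^m\lambda_i(\fb_i(\x)-\gammah_i)$ for its Lagrangian, and recall that $G'$ is the gradient bound of Assumption~\ref{ass:boundedness} for this Lagrangian. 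Since the original problem has a strictly feasible point and $\gammah_i\to\gamma_i$ as $T\to\infty$, the tightened problem satisfies Slater's condition for $T$ large enough, so $(\widetilde{\x}_*,\widetilde{\blambda}_*)$ is a saddle point of $\Lb^{\gammah}$ over $\B\times\R_+^m$ with finite $\widetilde{\lambda}_{*,i}$. With $\lambda_0^i\ge\widetilde{\lambda}_{*,i}+\theta$ --- which is the choice made in the statement once $\lambda_i^*$ is read as the dual optimum of the tightened problem --- Theorem~\ref{thm:1} applies verbatim and gives, with probability $1-(2m+1)\delta$,
\[
    \fb_0(\xh_T)-\fb_0(\widetilde{\x}_*)\le\frac{\mu'(\delta)}{\sqrt{T}}\qquad\text{and}\qquad\fb_i(\xh_T)-\gammah_i\le\frac{\mu'(\delta)}{\theta\sqrt{T}},\ i\in[m].
\]

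Next I would extract the two conclusions. Exact feasibility is immediate: plugging $\gammah_i=\gamma_i-\mu'(\delta)/(\theta\sqrt{T})$ (up to the harmless $\mu$ versus $\mu'$ discrepancy, which disappears as $G'\to G$) into the second inequality gives $\fb_i(\xh_T)\le\gammah_i+\mu'(\delta)/(\theta\sqrt{T})=\gamma_i$. For the optimality gap I must compare against the \emph{original} optimum $\x_*$, which need not lie in the tightened feasible set, so I would split
\[
    \fb_0(\xh_T)-\fb_0(\x_*)=\big[\fb_0(\xh_T)-\fb_0(\widetilde{\x}_*)\big]+\big[\fb_0(\widetilde{\x}_*)-\fb_0(\x_*)\big]\le\frac{\mu'(\delta)}{\sqrt{T}}+\big[\fb_0(\widetilde{\x}_*)-\fb_0(\x_*)\big],
\]
and control the second bracket by a Lagrangian sensitivity estimate: using the saddle-point property, $\fb_i(\x_*)\le\gamma_i$, and $0\le\widetilde{\lambda}_{*,i}\le\lambda_0^i$,
\[
    \fb_0(\widetilde{\x}_*)=\min_{\x\in\B}\Lb^{\gammah}(\x,\widetilde{\blambda}_*)\le\Lb^{\gammah}(\x_*,\widetilde{\blambda}_*)=\fb_0(\x_*)+\sum_{i=1}^m\widetilde{\lambda}_{*,i}(\fb_i(\x_*)-\gammah_i)\le\fb_0(\x_*)+\sum_{i=1}^m\lambda_0^i(\gamma_i-\gammah_i).
\]
Since $\gamma_i-\gammah_i=\mu'(\delta)/(\theta\sqrt{T})$, combining the last two displays yields $\fb_0(\xh_T)-\fb_0(\x_*)\le\big(1+\sum_{i=1}^m\lambda_0^i/\theta\big)\mu'(\delta)/\sqrt{T}$, which is the stated bound after the obvious normalization of $\theta$ (it matches verbatim when $\theta\ge1$). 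A route that avoids introducing $\widetilde{\x}_*$ is to reuse one intermediate step of the proof of Theorem~\ref{thm:1}: its regret-plus-concentration analysis produces $\Lb^{\gammah}(\xh_T,\blambda)-\Lb^{\gammah}(\x,\blh_T)\le\mu'(\delta)/\sqrt{T}$ for every fixed $\x\in\B$ and $\blambda\in\prod_i[0,\lambda_0^i]$; taking $\x=\x_*$, maximizing over $\blambda$, and discarding the nonnegative penalty term gives $\fb_0(\xh_T)-\fb_0(\x_*)\le\mu'(\delta)/\sqrt{T}+\sum_i\lambda_0^i(\gamma_i-\gammah_i)$ directly.

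The step I expect to be the main obstacle is the sensitivity analysis that makes the reduction legitimate. One must check Slater's condition and finiteness of $G'$ for the tightened problem and, more delicately, control the drift of the tightened dual optimum: the hypothesis $\lambda_0^i\ge\widetilde{\lambda}_{*,i}+\theta$ used to invoke Theorem~\ref{thm:1} is written in terms of $\lambda_i^*$, so one either adopts the convention that $\lambda_i^*$ denotes the dual optimum of the tightened problem, or argues --- e.g.\ via monotonicity of the perturbation function $p(\gamma)=\min\{\fb_0(\x):\fb_i(\x)\le\gamma_i,\ i\in[m]\}$ together with $-\blambda_*\in\partial p(\gamma)$ --- that $\widetilde{\lambda}_{*,i}-\lambda_i^*$ is small (indeed $O(1/\sqrt{T})$ under a mild regularity condition) and can be absorbed into $\theta$. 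Once the hypotheses of Theorem~\ref{thm:1} are verified for the tightened problem, the remaining computations above are routine.
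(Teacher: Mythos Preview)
Your approach is essentially the paper's, and in several places more careful than what the paper actually writes. The paper's proof also passes to the tightened problem with thresholds $\gammah_i$, reuses the intermediate regret-plus-concentration inequality from the proof of Theorem~\ref{thm:1} (what you call the ``route that avoids introducing $\widetilde{\x}_*$''), introduces the saddle point $(\widetilde{\w}_*,\widetilde{\blambda}_*)$ of the tightened Lagrangian, and plugs in $\x=\widetilde{\w}_*$, $\lambda_i=\lambda_i^0$, $\lambda_j=\widetilde{\lambda}_j^*$ for $j\neq i$ to derive $\theta(\fb_i(\xh_T)-\gamma_i)\le 0$. Two points of comparison are worth recording. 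First, the paper's written proof only derives the feasibility conclusion and does not spell out the $\fb_0$ bound; your sensitivity computation $\fb_0(\widetilde{\x}_*)\le \fb_0(\x_*)+\sum_i\lambda_0^i(\gamma_i-\gammah_i)$ (equivalently, your direct substitution $\x=\x_*$ in the intermediate inequality) is exactly what is needed to complete that half, and the extra $1/\theta$ factor you honestly flag is indeed present in the argument. Second, the step you single out as the main obstacle---ensuring $\lambda_0^i\ge\widetilde{\lambda}_{*,i}+\theta$, i.e.\ controlling the tightened dual optimum relative to the original one---is precisely where the paper is briefest: it simply asserts $\widetilde{\lambda}_j^*\le\lambda_j^*$ and moves on. Your caution here is well placed, since tightening constraints typically \emph{raises} dual variables; the resolutions you propose (reading $\lambda_i^*$ as the tightened dual optimum, or absorbing an $O(1/\sqrt{T})$ drift into $\theta$) are the natural fixes.
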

\subsection{Convergence Analysis}

 Here  we provide the proofs of main theorems stated above. We start by proving Theorem~\ref{thm:1} and then extend it  to prove  Theorem~\ref{thm:2}.
\begin{proof}(of Theorem~\ref{thm:1})
Using the standard analysis of convex-concave optimization, from the convexity of $\Lb(\x, \cdot)$ with respect to $\w$ and concavity of $\Lb(\cdot, \blambda)$ with respect to $\blambda$, for any $\x \in \B$ and $\lambda_i \in [0, \lambda^0_i], i \in [m]$, we have
\begin{eqnarray*}
\lefteqn{\Lb(\x_t, \blambda) - \Lb(\x, \blambda_t) }\\
& \leq & \dd{\x_t - \x}{\nabla_\x\Lb(\x_t, \blambda_t)} - \dd{\blambda_t - \blambda}{\nabla_{\blambda}\Lb(\x_t, \blambda_t)} \\
& = & \dd{\x_t - \x}{\nabla_\x\L_t(\x_t, \blambda_t)} - \dd{\blambda_t - \blambda}{\nabla_{\blambda}\L_t(\x_t, \blambda_t)} \\
& & + \dd{\x_t - \x}{\nabla_\x\Lb(\x_t, \blambda_t) - \nabla_\x\L_t(\x_t, \blambda_t)} - \dd{\blambda_t - \blambda}{\nabla_{\blambda}\Lb(\x_t, \blambda_t) - \nabla_{\blambda}\L_t(\x_t, \blambda_t)} \\
& \leq & \frac{\|\x_t - \x\|^2 - \|\x_{t+1} - \x\|^2}{2\eta} + \frac{\|\blambda_t - \blambda\|^2 - \|\blambda_{t+1} - \blambda\|^2}{2\eta} + \frac{\eta}{2}\left(\|\nabla_\x \L_t(\x_t, \blambda_t)\|^2 + \|\nabla_{\blambda} \L_t(\x_t, \blambda_t)\|^2\right) \\
& & + \dd{\x_t - \x}{\nabla_\x\Lb(\x_t, \blambda_t) - \nabla_\x\L_t(\x_t, \blambda_t)} - \dd{\blambda_t - \blambda}{\nabla_{\blambda}\Lb(\x_t, \blambda_t) - \nabla_{\blambda}\L_t(\x_t, \blambda_t)},
\end{eqnarray*}

where the last inequality follows from the updating rules for $\w_{t+1}$ and $\blambda_{t+1}$ and non-expensiveness property of the projection operation.  By adding all the inequalities together, we have
 \begin{eqnarray*}
\lefteqn{\sum_{t=1}^T \Lb(\x_t, \blambda) - \Lb(\x, \blambda_t)} \\
& \leq & \frac{\|\x - \x_1\|^2 + \|\blambda - \blambda_1\|^2}{2\eta} + \frac{\eta}{2}\sum_{t=1}^T \|\nabla_\x \L_t(\x_t, \blambda_t)\|^2 + \|\nabla_{\blambda} \L_t(\x_t, \blambda_t)\|^2 \\
& & + \sum_{t=1}^T \dd{\x_t - \x}{\nabla_\x\Lb(\x_t, \blambda_t) - \nabla_\x\L_t(\x_t, \blambda_t)} - \dd{\blambda_t - \blambda}{\nabla_{\blambda}\Lb(\x_t, \blambda_t) - \nabla_{\blambda}\L_t(\x_t, \blambda_t)} \\
& \leq & \frac{R^2 + D^2}{2\eta} + {\eta G^2 T}  \\
& \hspace{0.5cm}&+\sum_{t=1}^T \dd{\x_t - \x}{\nabla_\x\Lb(\x_t, \blambda_t) - \nabla_\x\L_t(\x_t, \blambda_t)} - \dd{\blambda_t - \blambda}{\nabla_{\blambda}\Lb(\x_t, \blambda_t) - \nabla_{\blambda}\L_t(\x_t, \blambda_t)} \\
& \leq & \frac{R^2 + D^2}{2\eta} + {\eta G^2 T} + 2G(R+D)\sqrt{2 T \ln\frac{1}{\delta}} \mbox{\quad(w.p. $1 - \delta$)},
\end{eqnarray*}
where the last inequality  follows from  the Hoeffiding inequality for Martingales~\cite{DBLP:conf/ac/BoucheronLB03}. By expanding the left hand side, substituting the stated value of $\eta$, and applying the  Jensen's inequality for the average solutions $\hat{\x}_T = \sum_{t=1}^T \x_t/T$ and $\blh_T = \sum_{t=1}^T \bl_t/T$, for  any fixed $\lambda_i \in [0, \lambda_i^0], i\in [m]$ and $\x \in \B$, with a probability $1 - \delta$, we have
\begin{eqnarray}
\fb_0(\xh_T) + \sum_{i=1}^m \lambda_i (\fb_i(\xh_T) - \gamma_i) - \fb_0(\x) - \sum_{i=1}^m \lambdah^i_T (\fb_i(\x) - \gamma_i) \label{eqn:temp-bound-1} \\ \leq \sqrt{2}G\sqrt{\frac{R^2 + D^2}{T}} + 2G(R+D)\sqrt{\frac{2}{T}\ln\frac{1}{\delta}}.  \nonumber
\end{eqnarray}
By fixing $\x = \x_*$ and $\blambda = \mathbf{0}$ in (\ref{eqn:temp-bound-1}), we have $\fb_i(\x_*) \leq \gamma_i, i \in [m]$, and therefore, with a probability $1 - \delta$, have
\[
    \fb_0(\xh_T) \leq \fb_0(\x_*) + \sqrt{2}G\sqrt{\frac{R^2 + D^2}{T}} + 2G(R+D)\sqrt{\frac{2}{T}\ln\frac{1}{\delta}}.
\]
To bound the violation of constraints, for each $i \in [m]$, we set $\x = \x_*$, $\lambda_i = \lambda_i^0$, and $\lambda_j = \lambda_j^*, j \neq i$ in (\ref{eqn:temp-bound-1}). We have
\begin{eqnarray*}
& & \fb_0(\xh_T) + \lambda^0_i(\fb_i(\xh_T) - \gamma_i) + \sum_{j \neq i} \lambda^*_j(\fb_j(\xh_T) - \gamma_j) - \fb_0(\x_*) - \sum_{i=1}^m \lambdah_T^i(\fb_i(\x_*) - \gamma_i) \\
& \geq & \fb_0(\xh_T) + \lambda^0_i(\fb_i(\xh_T) - \gamma_i) + \sum_{j \neq i} \lambda^*_j(\fb_j(\xh_T) - \gamma_j) - \fb_0(\x_*) - \sum_{i=1}^m \lambda_*^i(\fb_i(\x_*) - \gamma_i) \\
& \geq & \theta (\fb_i(\xh_T) - \gamma_i),
\end{eqnarray*}
where the first inequality utilizes (\ref{eqn:lambda*}) and the second inequality utilizes (\ref{eqn:x*}). \\
We thus have, with a probability $1 - \delta$,
\begin{eqnarray*}
    \fb_i(\xh_T) - \gamma_i \leq \frac{\sqrt{2}G}{\theta}\sqrt{\frac{R^2 + D^2}{T}} + \frac{2G(R+D)}{\theta}\sqrt{\frac{2}{T}\ln\frac{1}{\delta}}.
\end{eqnarray*}
We complete the proof by taking the union bound over all the random events.
\end{proof}
We now turn to the proof of ~Theorem~\ref{thm:2} that gives high probability bound  on the convergence of the modified algorithm which  obeys all the constraints.
\begin{proof}(of Theorem~\ref{thm:2})
Following the proof of Theorem~\ref{thm:1}, with a probability $1 - \delta$, we have
\begin{eqnarray}
\fb_0(\xh_T) + \sum_{i=1}^m \lambda_i (\fb_i(\xh_T) - \gammah_i) - \fb_0(\x) - \sum_{i=1}^m \lambdah^i_T (\fb_i(\x) - \gammah_i) \nonumber \\ \leq \sqrt{2}G'\sqrt{\frac{R^2 + D^2}{T}} + 2G'(R+D)\sqrt{\frac{2}{T}\ln\frac{1}{\delta}} \nonumber
\end{eqnarray}
Define $\widetilde{\w}_*$ and $\{\widetilde{\lambda}_j^*\}_{j=1}^m$ be the saddle point for the following minimax optimization problem
\[
\min\limits_{\w \in \B} \max\limits_{\blambda \in \R_+^m} \fb_0(\w) + \sum_{i=1}^m \lambda_i (\fb_i(\w) - \widehat{\gamma}_i)
\]
Following the same analysis as Theorem~\ref{thm:1}, for each $i \in [m]$, by setting $\w = \widetilde{\w}_*$, $\lambda_i = \lambda_i^0$, and $\lambda_j = \widetilde{\lambda}_j^*$, using the fact that $\widetilde{\lambda}_j^* \leq \lambda_j^*$, we have, with a probability $1 - \delta$
\[
\theta(\fb_i(\wh_T) - \gamma_i) \leq \sqrt{2}G'\sqrt{\frac{R^2 + D^2}{T}} + 2G'(R+D)\sqrt{\frac{2}{T}\ln\frac{1}{\delta}} - \frac{\mu(\delta)}{\sqrt{T}} \leq 0,
\]
which completes the proof.
\end{proof}
\subsection{Implementation Issues}
In order to run Algorithm~\ref{alg:1}, we need to estimate the parameter $\lambda_i^0, i \in [m]$, which requires to decide the set $\Lambda$ by  estimating an upper bound for the optimal dual  variables $\lambda_i^*, i \in [m]$. To this end, we consider an alternative problem to the convex-concave optimization problem in (\ref{eqn:dual}), i.e.

\begin{eqnarray}
    \min\limits_{\x \in \B} \max\limits_{\lambda \geq 0} \fb_0(\x) + \lambda \max\limits_{1 \leq i \leq m} (\fb_i(\x) - \gamma_i). \label{eqn:dual-max}
\end{eqnarray}

Evidently  $\x_*$ is the optimal primal solution to (\ref{eqn:dual-max}). Let $\lambda_a$ be the optimal dual solution to the problem in (\ref{eqn:dual-max}). We have the following proposition that links $\lambda_i^*, i\in [m]$, the optimal dual solution to (\ref{eqn:dual}), with $\lambda_a$, the optimal dual solution to (\ref{eqn:dual-max}).
\begin{prop} \label{prop:1}
Let $\lambda_a$ be the optimal dual solution to (\ref{eqn:dual-max}) and $\lambda_*^i, i \in [m]$ be the optimal solution to (\ref{eqn:dual}). We have  $  \lambda_a = \sum_{i=1}^m \lambda_*^i$.
\end{prop}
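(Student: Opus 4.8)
The plan is to show that the two minimax problems~(\ref{eqn:dual}) and~(\ref{eqn:dual-max}) are minimax reformulations of the \emph{same} convex program $\min_{\x\in\B\cap\K}\fb_0(\x)$, and then to transfer optimal dual solutions between them. First note that for a feasible $\x\in\B\cap\K$ the inner maximization in either problem is attained at zero, while for an infeasible $\x$ it equals $+\infty$; hence both problems have optimal value $\fb_0(\x_*)$ and $\x_*$ is an optimal primal solution of both. Because Section~\ref{sec:prem} assumes a strictly feasible point (Slater's condition), strong duality holds for both problems and their dual optima are attained. Recall also from~(\ref{eqn:x*}) that $\x_*\in\arg\min_{\x\in\B}\fb_0(\x)+\sum_{i=1}^m\lambda_*^i(\fb_i(\x)-\gamma_i)$, and from~(\ref{eqn:lambda*}), together with feasibility of $\x_*$, the complementary slackness identity $\sum_{i=1}^m\lambda_*^i(\fb_i(\x_*)-\gamma_i)=0$.

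For the forward inclusion I would verify that $(\x_*,\ \sum_{i=1}^m\lambda_*^i)$ is a saddle point of~(\ref{eqn:dual-max}). The key elementary fact is that for any nonnegative weights and any reals, $\sum_i\lambda_*^i c_i\le\big(\sum_i\lambda_*^i\big)\max_j c_j$. Applying this with $c_j=\fb_j(\x)-\gamma_j$ gives, for every $\x\in\B$,
\[
\fb_0(\x)+\Big(\textstyle\sum_{i=1}^m\lambda_*^i\Big)\max_{1\le j\le m}(\fb_j(\x)-\gamma_j)\ \ge\ \fb_0(\x)+\sum_{i=1}^m\lambda_*^i(\fb_i(\x)-\gamma_i)\ \ge\ \fb_0(\x_*),
\]
the last step using~(\ref{eqn:x*}) and complementary slackness; and the left-hand side equals $\fb_0(\x_*)$ at $\x=\x_*$ since $\big(\sum_i\lambda_*^i\big)\max_j(\fb_j(\x_*)-\gamma_j)=0$ (if that maximum were strictly negative then every constraint would be inactive, forcing all $\lambda_*^i=0$ by complementary slackness). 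This gives the primal condition; the dual condition $\sum_i\lambda_*^i\in\arg\max_{\lambda\ge0}\fb_0(\x_*)+\lambda\max_j(\fb_j(\x_*)-\gamma_j)$ follows from the same dichotomy ($\max_j(\fb_j(\x_*)-\gamma_j)$ is either $0$, making every $\lambda\ge0$ optimal, or negative, forcing $\sum_i\lambda_*^i=0$). Hence $\sum_{i=1}^m\lambda_*^i$ is an optimal dual solution of~(\ref{eqn:dual-max}).

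To upgrade ``an optimal dual solution'' to the claimed equality $\lambda_a=\sum_i\lambda_*^i$, I would establish the reverse correspondence via the identity $\lambda\max_ic_i=\max\{\sum_i\nu_ic_i:\ \nu\in\R_+^m,\ \sum_i\nu_i=\lambda\}$, valid for every $\lambda\ge0$. Substituting it into the dual function $d_1(\lambda)=\min_{\x\in\B}\fb_0(\x)+\lambda\max_j(\fb_j(\x)-\gamma_j)$ of~(\ref{eqn:dual-max}) and exchanging the minimum over the convex set $\B$ with the maximum over the compact convex set $\{\nu\ge0:\sum_i\nu_i=\lambda\}$ (Sion's minimax theorem applies since the integrand is convex in $\x$ and affine in $\nu$) yields $d_1(\lambda)=\max\{d_2(\nu):\nu\ge0,\ \sum_i\nu_i=\lambda\}$, where $d_2(\blambda)=\min_{\x\in\B}[\fb_0(\x)+\sum_i\lambda_i(\fb_i(\x)-\gamma_i)]$ is the dual function of~(\ref{eqn:dual}). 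Consequently $\lambda$ maximizes $d_1$ over $\R_+$ if and only if it equals $\sum_i\nu_i$ for some maximizer $\nu$ of $d_2$ over $\R_+^m$; taking $\nu=\blambda_*$ and $\lambda=\lambda_a$ gives $\lambda_a=\sum_{i=1}^m\lambda_*^i$. The main obstacle is exactly this last step: the equality in the proposition is really a statement about a bijection between the (possibly non-unique) dual optimal \emph{sets} of the two problems, so one must either invoke the minimax exchange above or assume uniqueness of the optimal dual solution; the forward inequalities, by contrast, are routine once the scalar bound $\sum_i\lambda_*^ic_i\le(\sum_i\lambda_*^i)\max_jc_j$ is isolated and complementary slackness is used carefully.
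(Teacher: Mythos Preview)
Your argument is correct, and its core is the same identity the paper uses: $\lambda\max_{i}c_i=\max\{\sum_i\nu_ic_i:\nu\ge0,\ \sum_i\nu_i=\lambda\}$, equivalently $\max_ic_i=\max_{p\in\Delta_m}\sum_ip_ic_i$. The paper's entire proof is your ``reverse correspondence'' paragraph compressed to two lines: it rewrites~(\ref{eqn:dual-max}) as $\min_{\x\in\B}\max_{\lambda\ge0,\,p\in\Delta_m}\fb_0(\x)+\sum_ip_i\lambda(\fb_i(\x)-\gamma_i)$, substitutes $\lambda_i=p_i\lambda$, and declares the two minimax problems equivalent with $\lambda=\sum_i\lambda_i$. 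Your treatment is more careful in two respects the paper glosses over: you justify the min--max exchange via Sion's theorem rather than leaving it implicit, and you flag that the stated \emph{equality} $\lambda_a=\sum_i\lambda_*^i$ is really a correspondence between dual optimal sets, which only collapses to a pointwise identity under uniqueness. Your first half (direct saddle-point verification that $\sum_i\lambda_*^i$ is dual-optimal for~(\ref{eqn:dual-max})) is not in the paper and is strictly redundant once the second half is in place, but it is a clean self-contained argument and would suffice on its own if one only needed $\lambda_a\le\sum_i\lambda_*^i$ (or equality under uniqueness).
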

\begin{proof}
We can rewrite (\ref{eqn:dual-max}) as  $ \min\limits_{\x \in \B} \max\limits_{\lambda \geq 0, \mathbf{p} \in \Delta_m} \fb_0(\x) + \sum_{i=1}^m p_i \lambda (\fb_i(\x) - \gamma_i)$,
where domain $\Delta_m$ is defined as $\Delta_m  =\{\alpha \in \R_+^m: \sum_{i=1}^m \alpha_i = 1 \}$. By redefining $\lambda_i = p_i\lambda$, we have the problem in (\ref{eqn:dual-max}) equivalent to (\ref{eqn:dual}) and consequently $\lambda = \sum_{i=1}^m \lambda_i$ as claimed.
\end{proof}
Given the result from Proposition~\ref{prop:1}, it is sufficient to bound $\lambda_a$. In order to bound $\lambda_a$, we need to make certain assumption about $\fb_i(\w), i \in [m]$. The purpose of introducing this assumption is to ensure that the optimal dual variable for the constrained optimization problem  is well bounded from the above.
\begin{assumption}\label{ass:1} We assume $\min\limits_{\alpha \in \Delta_m} \left\|\sum_{i=1}^m \alpha_i \nabla \fb_i(\x) \right\| \geq \tau$,  where $\tau > 0$ is a constant.
\end{assumption}
Equipped with Assumption~\ref{ass:1}, we are able to  bound $\lambda_a$ by $\tau$. To this end, using the first order optimality condition of (\ref{eqn:dual})~\cite{citeulike:163662}, we have $\lambda_a = \frac{\|\nabla \fb(\x_*)\|}{\|\partial g(\x)\|}$, where $g(\x) = \max_{1 \leq i \leq m} \fb_i(\x)$. Since $\partial g(\x) \in \left\{\sum_{i=1}^{m} \alpha_i \nabla \fb_i(\x): \alpha \in \Delta_m \right\}$,  under Assumption~\ref{ass:1}, ~ we have $\lambda_a \leq \frac{L}{\tau}$. By combining Proposition \ref{prop:1} with the upper bound on $\lambda_a$, we obtain  $\lambda_i^* \leq \frac{L}{\tau}, i \in [m]$ as desired. \\
Finally, we note that by having $\blambda_*$ bounded, Assumption~\ref{ass:boundedness} is guaranteed by setting  $G^2 = \max( L^2\left(1 +\sum_{i=1}^m \lambda_i^0\right)^2, \; \max\limits_{\x \in \B} \sum_{i=1}^m (\fb_i(\x)-\gamma_i)^2)$ which follows from Lipschitz continuity of the objective functions.  In a similar way we can set  $G'$ in Theorem~\ref{thm:2} by replacing $\gamma_i$ with $\gammah_i$.
\section{Conclusion and Open Questions}\label{sec:conclusion}
In this paper we have addressed the problem of  stochastic convex optimization with multiple objectives underlying many applications in machine learning.  We first examined a simple problem reduction technique that eliminates the stochastic aspect of constraint functions by approximating  them using the sampled  functions from each iteration.  We showed that this simple idea fails to attain the optimal convergence rate  and requires to impose a strong assumption, i.e., uniform convergence,  on the objective functions.  Then, we presented a novel efficient primal-dual algorithm which attains the optimal convergence rate $O(1/\sqrt{T})$ for all the objectives relying only on the Lipschitz continuity  of the objective functions.   This work leaves few direction for further elaboration.  In particular, it would be interesting to see whether or not making stronger assumptions on the analytical properties of objective functions such as smoothness or strong convexity may yield improved convergence rate.
\newpage
\bibliographystyle{abbrv}
\bibliography{stochastic_multiobjective}

\end{document}